\documentclass{article}

\usepackage[numbers, square, comma, sort&compress]{natbib}
\usepackage[preprint]{neurips_2026}
\usepackage{hyphenat}
\setcitestyle{square, numbers, comma, sort&compress}
\usepackage[utf8]{inputenc} 
\usepackage[T1]{fontenc}    
\usepackage{hyperref}       
\usepackage{url}            
\usepackage{booktabs}       
\usepackage{amsfonts}       
\usepackage{nicefrac}       
\usepackage{microtype}      
\usepackage{xcolor}         
\usepackage{bbm}
\usepackage{dsfont}
\usepackage{amsmath}
\usepackage{enumitem}
\usepackage{amssymb,amsthm}
\usepackage{mathrsfs}
\usepackage{graphicx}
\usepackage{bbm}
\usepackage[capitalise]{cleveref}

\allowdisplaybreaks

\newtheorem{theorem}{Theorem}[section]
\newtheorem{condition}[theorem]{Condition}

\newtheorem{corollary}[theorem]{Corollary}
\newtheorem{proposition}[theorem]{Proposition}

\theoremstyle{definition}

\theoremstyle{remark}
\newtheorem{remark}[theorem]{Remark}

\def\beqlb{\begin{eqnarray}}\def\eeqlb{\end{eqnarray}}
\def\beqnn{\begin{eqnarray*}}\def\eeqnn{\end{eqnarray*}}

\def\mbf{\mathbf}

\def\itGamma{{\it\Gamma}}

\def\ar{\!\!\!&}

\title{Matrix-Decoupled Concentration for Autoregressive Sequences: Dimension-Free Guarantees \\  for Sparse Long-Context Rewards}

\author{%
 Pei-Sen Li \\
  School of Mathematics and Statistics, Beijing Institute of Technology, Beijing 100872, China \\
  \texttt{peisenli@bit.edu.cn} \\
}

\begin{document}

\maketitle
\begin{abstract}
Sequence-level evaluations in autoregressive Large Language Models (LLMs) rely on highly dependent token generation. Establishing tight concentration bounds for these processes remains a challenge due to two fundamental bottlenecks in existing frameworks: (i) classical inequalities typically separate dependency structures from target sensitivities, leading to a scalar collapse that inflates the variance proxy to a suboptimal $\mathcal{O}(N)$ for sparse terminal rewards; (ii) conversely, while certain spatial methods achieve tighter bounds, they lack the strictly causal filtration required by sequential generation, rendering them inapplicable to the autoregressive setting. To resolve both bottlenecks, we establish a sharp McDiarmid-type inequality for dependent sequences, governed strictly by the exact matrix-vector multiplication of the causal dependency resolvent and the target sensitivity vector. This Matrix-Decoupled Concentration (MDC) framework natively recovers optimal constants for Markov chains and exploits directed $d$-separation to yield order-optimal bounds for causal trees. Crucially, by exactly preserving the coordinate-wise sparsity of rewards within a strictly causal framework, MDC mathematically prevents scalar collapse, guaranteeing a dimension-free $\mathcal{O}(1)$ variance proxy and providing a rigorous mathematical justification for the stability of long-context reasoning.
\end{abstract}

\section{Introduction}
\label{sec:intro}
The theoretical foundation of modern machine learning, particularly in Reinforcement Learning from Human Feedback (RLHF) \cite{christiano2017deep, ouyang2022training, schulman2017proximal}, relies heavily on sequence-level evaluations. This includes modern alignment paradigms such as Outcome-based Reward Models (ORM) and Direct Preference Optimization (DPO) \cite{rafailov2023direct, lightman2023lets}. In autoregressive Large Language Models (LLMs), a model generates a trajectory of categorical tokens $X_1, \dots, X_N$, and a target function $f(X_1, \dots, X_N)$ assigns a global scalar score. Establishing finite-sample deviation bounds for these dependent generative processes is a fundamental challenge, typically expressed as tail bounds of the form:
\begin{equation}
\label{eq:tail_bound}
\mathbb{P} \left( \left| f(X_1, \dots, X_N) - \mathbb{E}[f(X_1, \dots, X_N)] \right| \ge t \right) \le \delta(t, N).
\end{equation}
Crucially, the optimal decay rate of the failure probability $\delta$ is jointly governed by the \textbf{sensitivity vector} of the target function $f$ and the \textbf{dependency structure} among the variables $X_1, \dots, X_N$.

Originating from classical bounds for independent variables \cite{hoeffding1963probability, mcdiarmid1989method}, the realm of McDiarmid-type inequalities for dependent sequences spans several mathematical paradigms (see, e.g., \cite{van2019concentration}). These include macroscopic mixing coefficients \cite{rio2000hoeffding, dedecker2007weak, klochkov2020uniform}, transportation-cost inequalities \cite{marton1996bounding, samson2000concentration}, spectral methods \cite{paulin2015concentration}, spatial coupling and Stein's method \cite{chazottes2007concentration, chatterjee2007stein}, and martingale-based graph or matrix methods \cite{janson2004large, kontorovich2008concentration, minsker2023matrix, howard2020time}. However, the non-Markovian, autoregressive generation in Large Language Models (LLMs) introduces profound structural mismatches for these existing frameworks, rendering them either suboptimal or inapplicable due to two distinct mathematical bottlenecks.

The first bottleneck is a phenomenon we formalize as \textbf{scalar collapse}. To obtain computable bounds, a majority of classical frameworks mathematically separate the dependency structure from the sensitivity vector. For instance, martingale graph methods \cite{kontorovich2008concentration, minsker2023matrix} isolate the interaction matrix via sub-multiplicative norm relaxations (e.g., bounding the variance proxy by proportional terms of $\|\itGamma\|_\infty^2 \|\mathbf{c}\|_\infty^2$). Similarly, transportation-cost approaches \cite{marton1996bounding} and spectral methods \cite{paulin2015concentration} condense temporal interactions into macroscopic scalar summaries, such as global contraction rates. Crucially, when evaluating sparse sequence-level rewards in LLMs, only the terminal token's sensitivity is non-zero ($c_N > 0$). In this context, extracting the sensitivity vector via its maximum norm ($\|\mathbf{c}\|_\infty$) imposes a uniform penalty across all coordinates. Factoring out these macroscopic scalars completely discards the topological sparsity inherent in the objective, artificially inflating the theoretical bound to an $\mathcal{O}(N)$ scale. 

The second bottleneck arises from \textbf{causal structure mismatch}. While spatial coupling \cite{chazottes2007concentration} and Stein's method \cite{chatterjee2007stein} bypass scalar collapse to achieve optimal $\mathcal{O}(N)$ concentration on undirected graphical models, they introduce a different fatal flaw. These spatial methods rely on differing boundary conditions on a global Gibbs measure, an approach that inherently requires simultaneous conditioning on the entire graphical structure. This global conditioning strictly violates the non-anticipative, sequential arrow of time required by autoregressive decoding. Consequently, despite empirical successes in scaling context windows to millions of tokens \cite{gemini2024gemini, openai2023gpt4, su2024roformer}, existing theory either implies an artificial $\mathcal{O}(N)$ instability or requires inapplicable global conditioning. 

To resolve both bottlenecks, we introduce the \textbf{Matrix-Decoupled Concentration (MDC)} framework. Instead of compressing the dependency structure into a macroscopic scalar, MDC explicitly encodes it as a causal interdependence matrix $H$. The variance proxy is then governed strictly by the exact matrix-vector multiplication $\|(I-H)^{-1} \mathbf{c}\|_2^2$, which retains significantly more structural information about the interaction between dependencies and sensitivities. By structurally isolating the non-zero terminal sensitivities, this algebraic formulation prevents the global accumulation of intermediate errors, yielding an order-optimal, dimension-free $\mathcal{O}(1)$ variance proxy for sparse rewards and analytically bypassing the $\mathcal{O}(N)$ inflation. Crucially, because its algebraic construction strictly respects the causal ordering of sequential generation, MDC provides a natively causal and rigorous mathematical foundation for evaluating autoregressive LLMs and broader non-Markovian sequential generation paradigms.

Our main contributions are summarized as follows:
\begin{itemize}[leftmargin=*]
    \item \textbf{The MDC Framework:} We propose Matrix-Decoupled Concentration (MDC) to encode the conditional dependency structure into a TV matrix $H$. We establish a McDiarmid-type inequality where variance proxies are determined by the exact matrix-vector multiplication of the causal resolvent $(I-H)^{-1}$ and the sensitivity vector $\mbf{c}$, preventing scalar collapse.
    \item \textbf{Generality and Sharpness:} We demonstrate that MDC natively recovers the optimal transport constants for homogeneous Markov chains \cite{marton1996bounding} and exploits graphical $d$-separation to establish order-optimal bounds for causal trees.
    \item \textbf{Dimension-Free Stability for LLMs:} We prove that MDC explicitly preserves the coordinate-wise sparsity of the sensitivity vector. For sparse rewards, the exact matrix-vector multiplication eliminates the $\mathcal{O}(N)$ inflation of classical bounds, guaranteeing an $\mathcal{O}(1)$ proxy for long-context generation.
\end{itemize}

\section{Problem Setup and Preliminaries}
\label{sec:setup}

Let $\mathcal{A}$ be an arbitrary finite set. Consider the canonical path space $\mathcal{A}^N$, equipped with the discrete $\sigma$-algebra $\mathcal{F} = 2^{\mathcal{A}^N}$. On this space, we define the sequence of random variables $\mathbf{X} := \{X_i\}_{i=1}^N$ as the coordinate projections, such that $X_i(x) = x_i$ for any trajectory $x = (x_1, \dots, x_N) \in \mathcal{A}^N$. For integers $1 \le a \le b \le N$, we use the shorthand $X_{a:b} := (X_a, \dots, X_b)$ and $x_{a:b} := (x_a, \dots, x_b)$. Our goal is to establish a sharp concentration bound for a measurable target function $f: \mathcal{A}^N \to \mathbb{R}$, without assuming mutual independence of $\mathbf{X}$.

To formally specify the dependence structure of $\mbf{X}$, we employ a bottom-up construction via a family of transition kernels $\{p_i\}_{i=1}^N$. For every step $i \in \{1,\dots,N\}$ and every history $x_{1:i-1} \in \mathcal{A}^{i-1}$, let $p_i(\,\cdot \mid x_{1:i-1})$ be a probability measure on $\mathcal{A}$. By the chain rule of probability, these localized transition kernels uniquely induce a global joint probability measure $\mathbb{P}$ on the path space $(\mathcal{A}^N, \mathcal{F})$, satisfying:
\beqnn
\mathbb{P}(X_{1:N} = x_{1:N}) = \prod_{i=1}^N p_i(x_i \mid x_{1:i-1}).
\eeqnn

We explicitly formalize the mapping of this mathematical structure to autoregressive Large Language Models (LLMs): the abstract state space $\mathcal{A}$ corresponds to the token vocabulary $\mathcal{V}$, and $p_i$ represents the conditional probability measure defined by the autoregressive prediction head for any arbitrary prefix. The joint distribution $\mathbb{P}$ therefore encodes the exact directed causal dependencies across generative steps.

To quantify the state-to-state conditional dependencies generated by these transition kernels, we rely on the Total Variation (TV) distance. For a finite set $\mathcal{A}$, the TV distance between measures $\mu$ and $\nu$ is:
\beqnn
d_{\mathrm{TV}}(\mu, \nu) = \frac{1}{2} \sum_{a \in \mathcal{A}} |\mu(a) - \nu(a)| = \max_{B \subseteq \mathcal{A}} |\mu(B) - \nu(B)|.
\eeqnn

Before bounding the stochastic dependency, we isolate the deterministic sensitivity of the target function $f$ using a coordinate-wise Lipschitz condition.

\begin{condition}[Generalized Lipschitz Vector]\label{cond:lipschitz_general}
The function $f: \mathcal{A}^N \to \mathbb{R}$ admits a sensitivity vector $\mathbf{c} = (c_1, \dots, c_N)^\top$ such that for any $x, y \in \mathcal{A}^N$,
$
|f(x) - f(y)| \le \sum_{j=1}^N c_j \mathbbm{1}_{\{x_j \neq y_j\}}.
$
\end{condition}

\textbf{Sparse Sequence-Level Reward.} In autoregressive generation where the evaluation metric depends exclusively on the final output (e.g., assessing only the terminal state $X_N$), altering intermediate coordinates does not change the objective value. This yields $c_j = 0$ for all $j < N$, and the sensitivity vector strictly reduces to $\mathbf{c} = (0, \dots, 0, c_N)^\top$.

\section{The Matrix-Decoupled Concentration Framework}
\label{sec:framework}

We now formalize the Matrix-Decoupled Concentration (MDC) framework. To prevent the scalar collapse discussed in Section \ref{sec:intro}, we encode the step-wise conditional dependencies into an interdependence matrix.

\subsection{The Causal Interdependence Matrix and Resolvent}

We define the \emph{causal interdependence matrix} $H \in \mathbb{R}^{N \times N}$, where each entry $H_{i,j}$ bounds the directed influence of step $i$ on the conditional distribution of step $j$. Because the transition kernel $p_j$ depends strictly on the causal history $x_{1:j-1}$, any perturbation at coordinate $i \ge j$ has zero influence on the conditional distribution. Consequently, we strictly set $H_{i,j} = 0$ for all $i \ge j$. For strictly upper-triangular entries ($1 \le i < j \le N$), $H_{i,j}$ bounds the maximum total variation distance between the transition kernels of $X_j$ caused by altering the single state of $X_i$:
\beqlb\label{defH}
H_{i,j} := \max_{\substack{\mathbf{z} \in \mathcal{A}^{i-1}\\ x, x' \in \mathcal{A}\\ \mathbf{w} \in \mathcal{A}^{j-i-1}}} 
d_{\mathrm{TV}}\Big( p_j(\cdot \mid \mathbf{z}, x, \mathbf{w}) \, , \, p_j(\cdot \mid \mathbf{z}, x', \mathbf{w}) \Big).
\eeqlb
Evaluating this supremum by explicitly conditioning on the intermediate trajectory $\mathbf{w}$ mathematically isolates the direct causal influence of step $i$ on the transition probability at step $j$. Conceptually, a non-zero entry $H_{i,j} > 0$ indicates a direct causal edge from $i$ to $j$. Because the intermediate trajectory $\mathbf{w}$ fixes the intervening states, any structural conditional independence (e.g., the Markov property) ensures that perturbations at step $i$ do not alter the transition probability at step $j$. This yields exactly $H_{i,j} = 0$ whenever the two steps lack a direct causal connection.

Since $H$ is strictly upper-triangular and entry-wise non-negative, it is nilpotent of degree $N$ (i.e., $H^N = \mathbf{0}$). This algebraic property guarantees that the matrix $I-H$ is invertible, and its inverse is given exactly by the finite, entry-wise non-negative Neumann series, which we term the \emph{causal resolvent matrix}:
\beqlb\label{eq:gamma_def}
\itGamma := (I - H)^{-1} = \sum_{r=0}^{N-1} H^r.
\eeqlb

\subsection{Matrix-Decoupled Concentration Theorem}

We present our main theorem, which bounds the sequential divergence purely through the exact matrix-vector multiplication between the causal resolvent $\itGamma$ and the sensitivity vector $\mathbf{c}$.

\begin{theorem}[Matrix-Decoupled Concentration]\label{thm:general_matrix_decoupling}
Suppose the sequence $\mathbf{X}:=\{X_i\}_{i=1}^N$ takes values in a finite set $\mathcal{A}$. For any target function $f$ satisfying Condition \ref{cond:lipschitz_general} with sensitivity vector $\mathbf{c}$, we have the explicit concentration inequality:
\beqlb\label{eq:delta_matrix_bound_general}
\mathbb{P}\bigl(|f(\mathbf{X})-\mathbb{E}[f(\mathbf{X})]|\ge t\bigr)
\le
2\exp\left(-\frac{2t^{2}}{\|\itGamma \mathbf{c}\|_2^2}\right),
\eeqlb
where $\|\itGamma \mathbf{c}\|_2^2 := \sum_{k=1}^N (\itGamma \mathbf{c})_k^2$ is the squared vector $\ell_2$-norm.
\end{theorem}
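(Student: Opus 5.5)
We will use the Doob martingale with Azuma--Hoeffding, in the style of Kontorovich--Ramanan, while keeping the sensitivities attached to each coordinate rather than factoring out $\|\mathbf{c}\|_\infty$. Define $M_k := \mathbb{E}[f(\mathbf{X})\mid X_{1:k}]$ for $k=0,\dots,N$, so that $M_N - M_0 = f(\mathbf{X})-\mathbb{E}[f(\mathbf{X})]$. The whole proof reduces to one claim: conditionally on $X_{1:k-1}$, the increment $D_k := M_k - M_{k-1}$ lies in an interval of length at most $(\itGamma\mathbf{c})_k$. The conditional Hoeffding lemma then gives $\mathbb{E}[e^{\lambda D_k}\mid X_{1:k-1}]\le \exp(\lambda^2(\itGamma\mathbf{c})_k^2/8)$. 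Iterating gives $\mathbb{E}[e^{\lambda(M_N-M_0)}]\le\exp(\lambda^2\|\itGamma\mathbf{c}\|_2^2/8)$, and Chernoff optimization applied to both tails yields \eqref{eq:delta_matrix_bound_general}.

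To bound the interval length, it suffices to show that for a fixed prefix $x_{1:k-1}$ and any two values $a,a'\in\mathcal{A}$,
\[
\bigl|\mathbb{E}[f\mid x_{1:k-1},a]-\mathbb{E}[f\mid x_{1:k-1},a']\bigr|\le (\itGamma\mathbf{c})_k .
\]
Indeed, $M_k$ then ranges over a set of diameter at most $(\itGamma\mathbf{c})_k$ as $X_k$ varies, and $M_{k-1}$ is a convex combination of these values. We prove this by a backward induction on $k$ using a sequential (Markovian) coupling. Define
\[
\Delta_k := \sup_{x_{1:k-1},\,a,\,a'}\bigl|\mathbb{E}[f\mid x_{1:k-1},a]-\mathbb{E}[f\mid x_{1:k-1},a']\bigr| ,
\]
or more conveniently a quantity $V_k(\cdot)$ defined on pairs of prefixes that differ in a set of coordinates. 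We couple two continuations, one started from $(x_{1:k-1},a)$ and one from $(x_{1:k-1},a')$, step by step. At each step $j>k$, given both histories, we draw $X_j$ and $X_j'$ from the maximal coupling of $p_j(\cdot\mid\text{hist})$ and $p_j(\cdot\mid\text{hist}')$. The mismatch probability is at most the TV distance between the two kernels.

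By a telescoping/hybrid argument over the coordinates where the histories differ, and using definition \eqref{defH} with the intermediate $\mathbf{w}$ fixed, this TV distance is at most $\sum_{i<j,\ \text{hist}_i\neq\text{hist}'_i} H_{i,j}$. Each hybrid step changes one coordinate $i$ with everything else held fixed, which is exactly the form controlled by $H_{i,j}$. Let $\pi_j := \mathbb{P}(X_j\neq X_j')$ under the coupling, with $\pi_k=1$ and $\pi_j=0$ for $j<k$. Taking expectations gives the linear recursion
\[
\pi_j \le \sum_{i<j} H_{i,j}\,\pi_i ,
\]
so that $\pi \le e_k^\top(I+H+H^2+\cdots)=e_k^\top\itGamma$ entrywise, by nonnegativity and nilpotence of $H$. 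Condition \ref{cond:lipschitz_general} then bounds the difference of conditional expectations by $\sum_j c_j\pi_j\le\sum_j\itGamma_{k,j}c_j=(\itGamma\mathbf{c})_k$.

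The main obstacle is the expectation step in the recursion. The TV bound holds pathwise as $\mathbf{1}\{\text{mismatch at }j\}$ probability $\le\sum_i H_{i,j}\mathbf{1}\{X_i\neq X_i'\}$ conditionally on the past, and taking expectations must keep it linear. Because the per-step coupling is applied conditionally on the full joint history, the tower property gives $\pi_j\le\sum_i H_{i,j}\pi_i$ cleanly. We must check that the hybrid-path telescoping legitimately uses the maximum over $\mathbf{z},\mathbf{w}$ in \eqref{defH}: intermediate hybrid histories mix coordinates from both trajectories, but since the supremum is taken over all $\mathbf{z},\mathbf{w}$, this is fine. The remaining steps, namely the Hoeffding lemma for a conditionally range-bounded increment and the Chernoff step, are routine.
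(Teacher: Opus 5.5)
Your proposal is correct and follows the paper's proof essentially step for step: the Doob martingale, a step-wise maximal coupling of the two continuations from $(x_{1:k-1},a)$ and $(x_{1:k-1},a')$, the hybrid telescoping bound giving $\pi_j\le\sum_{i<j}H_{i,j}\pi_i$, inversion via the nonnegative Neumann series to get $\pi\le e_k^\top\itGamma$, then Lipschitz, Hoeffding's lemma and the Chernoff step. The ``backward induction'' and the $V_k$ you mention are never used and can be dropped, since the forward coupling you carry out is the whole argument.
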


\begin{proof}[Proof Sketch of Theorem \ref{thm:general_matrix_decoupling}]
We provide a brief outline of the key algebraic steps below; the complete measure-theoretic derivation is deferred to Appendix \ref{sec:appendix_proof}.

\emph{Step 1: Martingale Decomposition.} Let $\Delta_k = M_k - M_{k-1}$ be the Doob martingale difference sequence for $f(\mathbf{X})$ with respect to the natural filtration. For any fixed conditioning prefix $\mathbf{x} \in \mathcal{A}^{k-1}$, the conditional range (oscillation) of $\Delta_k$, denoted as $\delta_k(\mathbf{x})$, is evaluated by the maximum divergence between expected future trajectories conditioned on differing realizations $x$ and $x'$ at step $k$.

\emph{Step 2: Vectorized Path Coupling.} We construct a joint probability coupling $(\mathbf{Y}, \mathbf{Z})$ such that the prefixes are deterministically fixed to $Y_{1:k} = (\mathbf{x}, x)$ and $Z_{1:k} = (\mathbf{x}, x')$. Let $v_i := \hat{\mathbb{P}}(Y_i \neq Z_i)$ be the marginal coupling discrepancy probability. By evaluating the telescoping sum of the total variation distance over intermediate trajectories, the discrepancy satisfies a recursive bound constrained by the interdependence matrix $H$:
\beqnn
v_j \le \sum_{m=k}^{j-1} H_{m,j} v_m, \quad \forall j > k.
\eeqnn
Defining the discrepancy vector $\mathbf{v} := (0, \dots, 0, 1, v_{k+1}, \dots, v_N)^\top$, this recursive system is expressed in matrix-vector form as $(I - H^\top)\mathbf{v} \le \mathbf{e}_k$, where $\mathbf{e}_k$ is the $k$-th standard basis vector.

\emph{Step 3: Algebraic Resolvent.} Because $H$ is strictly upper-triangular and entry-wise non-negative, the transposed resolvent matrix $\itGamma^\top = (I - H^\top)^{-1} = \sum_{r=0}^{N-1} (H^\top)^r$ is also entry-wise non-negative. Left-multiplying the inequality by $\itGamma^\top$ strictly preserves the inequality direction, yielding $\mathbf{v} \le \itGamma^\top \mathbf{e}_k$. Applying the Lipschitz Condition \ref{cond:lipschitz_general} connects the probability bounds exactly to the target function's sensitivity vector $\mathbf{c}$:
\beqnn
\delta_k(\mathbf{x}) \le \mathbf{c}^\top \mathbf{v} \le \mathbf{c}^\top (\itGamma^\top \mathbf{e}_k) = (\itGamma \mathbf{c})^\top \mathbf{e}_k = (\itGamma \mathbf{c})_k.
\eeqnn
This algebra guarantees that the conditional range of the zero-mean martingale difference $\Delta_k$ is strictly bounded by the scalar length $(\itGamma \mathbf{c})_k$. Applying Hoeffding's Lemma  \cite{hoeffding1963probability} to $\Delta_k$ and optimizing the exponential moments via Markov's inequality yields the stated theorem.
\end{proof}

While the exact matrix-vector multiplication $\itGamma \mathbf{c}$ natively preserves coordinate sparsity, extracting a scalar spectral decay coefficient directly connects our formulation to classical uniform bounds.

\begin{corollary}[Spectral Decay Coefficient]
\label{cor:general_kappa}
Assume all conditions in Theorem \ref{thm:general_matrix_decoupling} hold. Define the spectral decay coefficient of the system as $\kappa := \|\itGamma\|_2^{-2}$. Then for any $t > 0$:
\beqnn
\mathbb{P}\Big( |f(\mathbf{X}) - \mathbb{E}[f(\mathbf{X})]| \ge t \Big) \le 2\exp\left( - \frac{2 \kappa t^2}{\|\mathbf{c}\|_2^2} \right).
\eeqnn
Furthermore, if $H_{i,j} \le \varphi_{j-i}$ for all $1 \le i < j \le N$ with a bounded sum $S = \sum_{k=1}^\infty \varphi_k < 1$, then $\kappa \ge (1 - S)^2$. 
\end{corollary}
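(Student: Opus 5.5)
The first claim follows directly from Theorem \ref{thm:general_matrix_decoupling} once the variance proxy is compared with the operator norm. By the definition of the spectral norm, $\|\itGamma \mathbf{c}\|_2^2 \le \|\itGamma\|_2^2 \|\mathbf{c}\|_2^2 = \|\mathbf{c}\|_2^2/\kappa$. Substituting this into the exponent of \eqref{eq:delta_matrix_bound_general} only increases the right-hand side, because the map $s \mapsto \exp(-2t^2/s)$ is increasing in $s>0$. This gives the first inequality. In the degenerate case $\mathbf{c}=\mathbf{0}$, $f$ is constant, so both sides are trivial.

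For the second claim we must show $\|\itGamma\|_2 \le (1-S)^{-1}$. I would bound $\|H\|_2$ rather than working with the full resolvent. Since $\|\cdot\|_2$ is sub-multiplicative, the Neumann series \eqref{eq:gamma_def} gives
\[
\|\itGamma\|_2 \le \sum_{r=0}^{N-1} \|H\|_2^r \le \frac{1}{1-\|H\|_2}
\]
whenever $\|H\|_2<1$. It therefore suffices to prove $\|H\|_2 \le S$.

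To do this I would use the Schur test, or equivalently the interpolation bound $\|H\|_2 \le \sqrt{\|H\|_1\|H\|_\infty}$. Each row sum is
\[
\sum_{j>i} H_{i,j} \le \sum_{j>i}\varphi_{j-i} \le S,
\]
and each column sum is
\[
\sum_{i<j} H_{i,j} \le \sum_{i<j} \varphi_{j-i} \le S.
\]
Here I use the nonnegativity of the entries, which holds because total variation distances are nonnegative, and the convention $H_{i,j}=0$ for $i\ge j$. Hence $\|H\|_1\le S$ and $\|H\|_\infty \le S$, so $\|H\|_2\le S<1$. This yields $\|\itGamma\|_2\le (1-S)^{-1}$ and therefore $\kappa = \|\itGamma\|_2^{-2} \ge (1-S)^2$.

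I expect no real obstacle. The only points needing care are that the column-sum bound uses the Toeplitz-type domination $\varphi_{j-i}$ so that both row and column sums are controlled by the same $S$, and the justification of $\|H\|_2 \le \sqrt{\|H\|_1\|H\|_\infty}$. For the latter, one can either cite it or note that $\|H\|_2^2 = \rho(H^\top H) \le \|H^\top H\|_\infty \le \|H^\top\|_\infty\|H\|_\infty = \|H\|_1\|H\|_\infty$.
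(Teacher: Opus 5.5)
Your proposal is correct and follows the paper's proof step for step: the induced-norm bound $\|\itGamma\mathbf{c}\|_2^2\le\|\itGamma\|_2^2\|\mathbf{c}\|_2^2=\|\mathbf{c}\|_2^2/\kappa$ gives the first claim, and Schur's test $\|H\|_2\le\sqrt{\|H\|_1\|H\|_\infty}\le S$ combined with the Neumann series gives $\|\itGamma\|_2\le(1-S)^{-1}$. One small point: your row- and column-sum step $\sum_{j>i}\varphi_{j-i}\le S$ tacitly assumes $\varphi_k\ge 0$ for every $k$, including $k\ge N$, where nonnegativity is not forced by $H$; the paper's argument makes the same assumption, so it is worth stating explicitly as part of the hypothesis.
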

\begin{proof}
The formal proof, utilizing Schur's test and the discrete Neumann series, is deferred to Appendix \ref{sec:appendix_proof}.
\end{proof}

For independent sequences, the transition kernels are identical regardless of history, ensuring $H = \mathbf{0}$. Consequently, the causal resolvent  $\itGamma$ reduces to the identity matrix, and the variance proxy strictly evaluates to $\|\mathbf{c}\|_2^2$, exactly recovering the classical McDiarmid's inequality without structural loss.

\section{Theoretical Comparisons and Structural Optimality}
\label{sec:comparison}

To demonstrate the strict sharpness of the MDC framework, we evaluate it against established concentration inequalities for two contrasting generative topologies: time-homogeneous Markov chains and directed causal trees.

\subsection{Recovering Optimal Constants for Markov Chains}
\label{subsec:markov_chains}
Despite its profound applicability to complex non-Markovian dependencies, we first apply the MDC framework to time-homogeneous Markov chains to verify its correspondence with classical optimal transport bounds.

Consider a time-homogeneous Markov chain $\mathbf{X} = \{X_i\}_{i=1}^N$ taking values in $\mathcal{A}$. Under the Markov property, the general sequence transition kernel $p_j(\cdot \mid x_{1:j-1})$ reduces to the time-invariant one-step transition probability $P(\cdot \mid x_{j-1})$. For any $j > i+1$, the intermediate trajectory $\mathbf{w} = x_{i+1:j-1}$ explicitly contains the conditioning state $x_{j-1}$. By the Markov property, for any prefix $\mathbf{z} \in \mathcal{A}^{i-1}$ and any perturbed states $x, x' \in \mathcal{A}$ at step $i$, we have the identity $p_j(\cdot \mid \mathbf{z}, x, \mathbf{w}) = P(\cdot \mid x_{j-1}) = p_j(\cdot \mid \mathbf{z}, x', \mathbf{w})$. Substituting this into Definition \eqref{defH} immediately yields $H_{i,j} = 0$ for all $j > i+1$.

We assume this chain possesses a uniform Dobrushin contraction coefficient $\alpha \in (0, 1)$, defined by:
\begin{equation}
\label{eq:dobrushin_alpha}
\alpha := \max_{x, x' \in \mathcal{A}} d_{\mathrm{TV}}\Big(P(\cdot \mid x) \, , \, P(\cdot \mid x')\Big).
\end{equation}
Consequently, the entries of $H$ satisfy $H_{i, i+1} \le \alpha$, with $H_{i,j} = 0$ otherwise. This nilpotency enables the explicit computation of the resolvent, establishing the following proposition which recovers the constants derived via transportation-cost inequalities \cite{marton1996bounding}:

\begin{proposition}[Bounds for Contracting Markov Chains]
\label{prop:markov_improvement}
Let $\mathbf{X}$ be a time-homogeneous Markov chain with Dobrushin contraction coefficient $\alpha \in (0, 1)$. For any target function $f$ satisfying Condition \ref{cond:lipschitz_general} with sensitivity vector $\mathbf{c}$, we have:
\beqlb\label{eq:markov_bound}
\mathbb{P}\Big(|f(\mathbf{X}) - \mathbb{E}[f(\mathbf{X})]| \ge t\Big) \le 2\exp\left(-\frac{2t^2(1-\alpha)^2}{\|\mathbf{c}\|_2^2}\right).
\eeqlb
\end{proposition}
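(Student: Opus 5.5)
The plan is to specialize Theorem \ref{thm:general_matrix_decoupling} to the bidiagonal structure of $H$ derived just above, and then bound $\|\itGamma \mathbf{c}\|_2$ by $\|\mathbf{c}\|_2/(1-\alpha)$. The tail bound \eqref{eq:markov_bound} then follows at once by monotonicity of $s\mapsto 2\exp(-2t^2/s)$.

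From the discussion preceding the proposition, $H_{i,i+1}\le\alpha$ and $H_{i,j}=0$ for $j\neq i+1$. Hence $(H^r)_{i,j}$ vanishes unless $j=i+r$, and
\[
(H^r)_{i,i+r}=\prod_{m=i}^{i+r-1}H_{m,m+1}\le\alpha^r .
\]
Since every entry of $H^r$ is non-negative, the Neumann series \eqref{eq:gamma_def} gives the entrywise bound $0\le\itGamma_{k,j}\le \alpha^{j-k}$ for $j\ge k$, and $\itGamma_{k,j}=0$ otherwise. Because $\mathbf{c}\ge 0$, it follows coordinatewise that
\[
0\le(\itGamma\mathbf{c})_k\le\sum_{j\ge k}\alpha^{j-k}c_j=(T\mathbf{c})_k ,
\]
where $T$ is the upper-triangular Toeplitz matrix with entries $\alpha^{j-k}$. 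Consequently $\|\itGamma\mathbf{c}\|_2\le\|T\mathbf{c}\|_2$.

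The remaining step is the operator-norm bound $\|T\|_2\le 1/(1-\alpha)$. I would use Schur's test, exactly as in Corollary \ref{cor:general_kappa}: every row sum and every column sum of $T$ is at most $\sum_{r\ge0}\alpha^r=1/(1-\alpha)$, so
\[
\|T\|_2\le\sqrt{\|T\|_1\|T\|_\infty}\le \frac{1}{1-\alpha}.
\]
Alternatively, one can invoke Corollary \ref{cor:general_kappa} directly with $\varphi_1=\alpha$ and $\varphi_k=0$ for $k\ge2$. Then $S=\alpha<1$, so $\kappa\ge(1-\alpha)^2$, and the corollary gives \eqref{eq:markov_bound} immediately. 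I would present the direct Schur computation and note the corollary as a shortcut.

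The main obstacle is the justification that $H_{i,i+1}\le\alpha$, including the first step. For the first step, $p_1$ has no history, and the kernel for $X_2$ depends only on $x_1$. So \eqref{defH} gives $H_{1,2}=\max d_{\mathrm{TV}}(P(\cdot\mid x),P(\cdot\mid x'))=\alpha$, which matches \eqref{eq:dobrushin_alpha}. This identification is where the time-homogeneity and the Markov reduction $p_j(\cdot\mid x_{1:j-1})=P(\cdot\mid x_{j-1})$ are used. The rest is routine matrix algebra.
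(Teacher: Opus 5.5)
Your proof is correct and follows essentially the paper's route: you specialize Theorem~\ref{thm:general_matrix_decoupling} and bound $\|\itGamma\|_2\le(1-\alpha)^{-1}$ via Schur's test. The only difference is cosmetic. The paper applies Schur's test to $H$ itself to get $\|H\|_2\le\alpha$ and then sums the Neumann series, whereas you apply it to the Toeplitz majorant $T$ of $\itGamma$. Your use of $\mathbf{c}\ge 0$ is harmless, since Condition~\ref{cond:lipschitz_general} forces it whenever $|\mathcal{A}|\ge 2$, and the entrywise domination $0\le\itGamma\le T$ already gives $\|\itGamma\|_2\le\|T\|_2$ without it.
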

\begin{proof} Because $H$ is a strictly upper-triangular matrix with non-zero entries existing exclusively on the first superdiagonal (bounded by $\alpha$), both the maximum absolute row sum and column sum of $H$ are bounded by $\alpha$. By Schur's test, the operator spectral norm satisfies $\|H\|_2 \le \alpha$. Consequently, the resolvent matrix $\itGamma = (I-H)^{-1}$ has a spectral norm bounded by $\|\itGamma\|_2 \le (1-\alpha)^{-1}$. This mathematically guarantees the vector $\ell_2$-norm inequality $\|\itGamma \mathbf{c}\|_2^2 \le \|\itGamma\|_2^2 \|\mathbf{c}\|_2^2 \le (1-\alpha)^{-2} \|\mathbf{c}\|_2^2$. Substituting this explicit variance proxy into Theorem \ref{thm:general_matrix_decoupling} yields the desired bound.
\end{proof}

\begin{remark}[Comparison with Classical Frameworks]
The MDC framework provides an analytical mechanism that bypasses specific structural bottlenecks present in two classical paradigms for dependent sequences:
\begin{enumerate}[leftmargin=*]
    \item \emph{Metric Conversion in Optimal Transport:} While Marton \cite{marton1996bounding} established the optimal $(1-\alpha)^{-2}$ variance multiplier using $L_1$ transportation costs, extending optimal transport to $L_2$ geometries to cover general Lipschitz functions (e.g., Samson \cite{samson2000concentration}) requires bounding the squared Wasserstein distance by the Total Variation distance. This metric conversion forces the one-step contraction $\alpha$ to appear as $\sqrt{\alpha}$, modifying the variance multiplier to $(1-\sqrt{\alpha})^{-2}$ \cite[Eq.~(2.8)]{samson2000concentration}. MDC avoids metric conversion by operating strictly on discrete probability differences via the TV matrix $H$, yielding the $(1-\alpha)^{-2}$ constant without requiring measure tensorization or strict convexity restrictions.
    \item \emph{Unconditional Matrices in Martingale Graphs:} Frameworks such as \cite{kontorovich2008concentration} evaluate the influence of $X_i$ on $X_j$ without conditioning on the intervening sequence $X_{i+1:j-1}$. For a Markov chain, failing to apply this conditional separation yields a dense upper-triangular matrix $\Delta_{i,j} = \alpha^{j-i}$. The infinity norm evaluates to $\|\Delta\|_\infty = \frac{\alpha}{1-\alpha}$, forcing a resolvent bound that scales with $(1 - \|\Delta\|_\infty)^{-2} = \left(\frac{1-\alpha}{1-2\alpha}\right)^2$. This formulation diverges mathematically for any $\alpha \ge 1/2$. Furthermore, decoupling the matrix via the sub-multiplicative inequality $\|\Delta\|_\infty^2 \|\mathbf{c}\|_\infty^2$ forces a scalar collapse. MDC rectifies both issues: conditioning strictly on one-step transitions truncates the matrix ($H_{i, i+1} \le \alpha$), guaranteeing mathematical convergence for all $\alpha \in (0,1)$, while evaluating $\|\itGamma \mathbf{c}\|_2^2$ exactly preserves the coordinate-wise sparsity of $\mathbf{c}$.
\end{enumerate}
\end{remark}

\subsection{Topological Sparsity in Causal Trees}
\label{subsec:causal_trees}

We apply the Matrix-Decoupled Concentration (MDC) framework to a sequence $\mathbf{X} = \{X_i\}_{i=1}^N$ generated by a directed causal tree. This structure demonstrates how MDC mathematically preserves arbitrary dependence structures within a standard sequential filtration.
Establishing optimal concentration bounds for graph-structured data under 
sequential generation is a recognized mathematical challenge. Existing literature 
handles graph-structured dependencies through two primary paradigms, both of 
which introduce structural limitations for autoregressive generation:

\begin{enumerate}[leftmargin=*]
    \item \textbf{One-Dimensional Linear Filtrations:} Sequential martingale 
    methods \cite{kontorovich2008concentration, samson2000concentration, 
    boucheron2013concentration} maintain a valid causal filtration but require 
    mapping the graph nodes into a one-dimensional sequence. Imposing a total 
    order on a tree structure forces variables to condition on topologically 
    unrelated parallel branches (a structural bottleneck analogous to chromatic 
    number dependencies in \cite{janson2004large}). This structural mismatch 
    can introduce pseudo-dependencies that inflate the macroscopic mixing 
    coefficients, potentially degrading the variance proxy from $\mathcal{O}(N)$ 
    to $\mathcal{O}(N^2)$ or causing the bounds to diverge.
    
    \item \textbf{Global Conditioning in Spatial Methods:} Spatial coupling 
    techniques \cite{chazottes2007concentration} offer powerful tools for 
    achieving optimal $\mathcal{O}(N)$ bounds on undirected graphical models 
    and Markov random fields \cite{wainwright2008graphical}. These methods, 
    however, are designed for a different problem setting. They operate by 
    establishing concentration via differing spatial boundary conditions on 
    a global Gibbs measure \cite{georgii2011gibbs}, an approach that inherently 
    requires conditioning on the entire graphical structure simultaneously. 
    This global conditioning renders them inapplicable under the strictly 
    causal filtration required by autoregressive generation, where each step 
    must depend solely on the past.
\end{enumerate}

The MDC framework resolves this tension. It operates within the standard causal sequential martingale setting---requiring no information about the future. By defining the interdependence matrix $H$ via one-step conditional transitions, MDC natively preserves the directed $d$-separation of the causal graph \cite{pearl1988probabilistic}, thereby achieving an optimal $\mathcal{O}(N)$ bound for a generative process where classical spatial methods are inapplicable. We formalize the topological and generative assumptions:
\begin{itemize}[leftmargin=*]
    \item \textbf{Topological Skeleton:} The index set $V = \{1, \dots, N\}$ forms a directed forest under a valid topological sort. Each non-root node $j$ has a unique parent $\pi(j)$ satisfying $\pi(j) < j$. The maximum out-degree of the graph is bounded by an integer $D \ge 1$.
    \item \textbf{Generative Kernel:} The conditional transition distribution of node $j$ depends exclusively on its immediate parent. For any historical prefix $x_{1:j-1}$, the transition kernel satisfies $p_j(\cdot \mid x_{1:j-1}) = P_j(\cdot \mid x_{\pi(j)})$. 
    \item \textbf{Sub-critical Contraction:} For any node $j$, the single-step conditional divergence satisfies $H_{\pi(j), j} \le \alpha$. We assume the tree operates in the sub-critical regime, satisfying $\alpha D < 1$.
\end{itemize}

Under these assumptions, we evaluate the entries of the interdependence matrix $H$. By definition \eqref{defH}, computing $H_{i,j}$ requires taking the supremum over prefixes that differ exclusively at coordinate $i$. Because the topological sort guarantees $\pi(j) < j$, the parent state $x_{\pi(j)}$ is strictly contained within the conditioning history $x_{1:j-1}$. For any perturbation at coordinate $i \neq \pi(j)$, the parent state $x_{\pi(j)}$ remains mathematically identical in both evaluated histories. Consequently, the local transition kernel $p_j$ outputs identical probability measures, yielding exactly $H_{i,j} = 0$. 

This algebraic property proves that $H$ is structurally sparse, bounded exactly by the adjacency matrix of the directed tree scaled by $\alpha$. Let $d(i,j)$ denote the directed path length from ancestor $i$ to descendant $j$. The resolvent matrix $\itGamma = (I-H)^{-1} = \sum_{r=0}^{N-1} H^r$ evaluates exactly to $\itGamma_{i,j} \le \alpha^{d(i,j)} \mathbbm{1}_{\{i \text{ is an ancestor of } j\}}$.

\begin{proposition}[Concentration on Sub-critical Causal Trees]
\label{prop:causal_trees}
Let $S_N = \sum_{i=1}^N f_i(X_i)$ be an additive target function with uniform coordinate-wise Lipschitz constants $c_i = 1$, $i=1,\dots,N$. Then the exact matrix-vector multiplication guarantees the uniform squared vector $\ell_2$-norm bound $\|\itGamma \mathbf{1}_N\|_2^2 \le \frac{N}{(1 - \alpha D)^2}$. Consequently, the sequence satisfies:
\beqlb\label{eq:branching_concentration}
\mathbb{P}\bigl(|S_N-\mathbb{E}[S_N]|\ge t\bigr)
\le
2\exp\left(-\frac{2t^{2}(1 - \alpha D)^2}{N}\right).
\eeqlb
\end{proposition}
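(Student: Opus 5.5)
The plan is to apply Theorem \ref{thm:general_matrix_decoupling} directly with $\mathbf{c} = \mathbf{1}_N$. The first step is to check that $S_N$ satisfies Condition \ref{cond:lipschitz_general} with this vector. Reading the uniform Lipschitz constant $c_i = 1$ as $\max_{a,a'}|f_i(a) - f_i(a')| \le 1$, we get for any $x, y \in \mathcal{A}^N$
\[
|S_N(x) - S_N(y)| \le \sum_{i} |f_i(x_i) - f_i(y_i)| \le \sum_i \mathbbm{1}_{\{x_i \neq y_i\}}.
\]
All the remaining work is to bound $\|\itGamma \mathbf{1}_N\|_2^2$ by $N/(1-\alpha D)^2$. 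Substituting this bound into \eqref{eq:delta_matrix_bound_general} then gives \eqref{eq:branching_concentration}.

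\textbf{Entrywise control of the resolvent.} Let $A$ be the adjacency matrix of the directed forest, with $A_{i,j} = \mathbbm{1}_{\{i = \pi(j)\}}$. The discussion preceding the proposition shows $H_{i,j} = 0$ unless $i = \pi(j)$, and by sub-critical contraction $H_{\pi(j),j} \le \alpha$. Hence $0 \le H \le \alpha A$ entrywise. Since all matrices involved are non-negative, this ordering is preserved under products, so $H^r \le \alpha^r A^r$ for every $r$. In a forest each node has a unique parent, so there is at most one directed path from $i$ to $j$. Therefore $(A^r)_{i,j} = \mathbbm{1}_{\{j \text{ is a descendant of } i \text{ at distance exactly } r\}}$. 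Summing the Neumann series \eqref{eq:gamma_def} gives
\[
\itGamma_{i,j} \le \alpha^{d(i,j)}\mathbbm{1}_{\{i \text{ is an ancestor of } j\}},
\]
with the convention that $i$ is its own ancestor at distance $0$. This confirms the formula stated before the proposition.

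\textbf{Row sums via out-degree.} Next I bound each coordinate of $\itGamma\mathbf{1}_N$ by grouping the descendants of $i$ by depth:
\[
(\itGamma \mathbf{1}_N)_i = \sum_j \itGamma_{i,j} \le \sum_{r\ge 0} \alpha^r \,\#\{j : d(i,j) = r\}.
\]
Because every out-degree is at most $D$, an induction on $r$ shows that $i$ has at most $D^r$ descendants at depth $r$. Thus
\[
(\itGamma\mathbf{1}_N)_i \le \sum_{r \ge 0} (\alpha D)^r = \frac{1}{1-\alpha D},
\]
where the geometric series converges because $\alpha D < 1$. Squaring and summing over the $N$ coordinates gives $\|\itGamma\mathbf{1}_N\|_2^2 \le N/(1-\alpha D)^2$, and Theorem \ref{thm:general_matrix_decoupling} finishes the proof.

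\textbf{Main obstacle.} The only delicate step is the passage from $H \le \alpha A$ to the bound on $\itGamma$. This needs two facts: entrywise domination is preserved under matrix powers because all entries are non-negative, and path counts in a forest are $0$ or $1$. Once these are stated explicitly, the depth-counting bound is a routine geometric-series argument. I would also remark that the bound is uniform in the tree shape, depending only on $D$ and $\alpha$, which is what makes the variance proxy $\mathcal{O}(N)$.
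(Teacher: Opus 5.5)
Your proposal is correct and follows essentially the same route as the paper. You apply Theorem~\ref{thm:general_matrix_decoupling} with $\mathbf{c}=\mathbf{1}_N$, bound each row sum $(\itGamma\mathbf{1}_N)_i$ by weighting the at most $D^r$ descendants at depth $r$ by $\alpha^r$, and sum the $N$ squared bounds. You also make explicit two steps the paper leaves implicit: that $S_N$ satisfies Condition~\ref{cond:lipschitz_general} with $\mathbf{c}=\mathbf{1}_N$, and that $0\le H\le \alpha A$ entrywise yields $\itGamma_{i,j}\le \alpha^{d(i,j)}$ through nonnegativity of matrix powers and uniqueness of directed paths in a forest.
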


\begin{proof}
Because $\itGamma_{i,j}$ is non-zero exclusively for valid directed paths, and the maximum number of descendants at generation distance $r$ is bounded by $D^r$, the row sum of $\itGamma$ is bounded by a convergent geometric series under the assumption $\alpha D < 1$:
$$
(\itGamma \mathbf{1}_N)_i = \sum_{j=i}^N \itGamma_{i,j} \le \sum_{r=0}^{\infty} D^r \alpha^r = \frac{1}{1 - \alpha D}.
$$
Substituting this uniform row-sum bound into the definition of the squared $\ell_2$-norm yields the variance proxy $\|\itGamma \mathbf{1}_N\|_2^2 = \sum_{i=1}^N (\itGamma \mathbf{1}_N)_i^2 \le \frac{N}{(1 - \alpha D)^2}$. Applying Theorem \ref{thm:general_matrix_decoupling} yields the result.
\end{proof}

\section{Application to Large Language Models}
\label{sec:llm_application}

Classical concentration inequalities relying on global scalar decay coefficients typically assume mutual independence or macroscopic mixing. These assumptions are mathematically violated in autoregressive Large Language Models (LLMs), where the self-attention mechanism introduces dense, non-Markovian dependencies across the sequence. In this section, we apply the MDC framework to establish finite-sample deviation bounds for sequence-level evaluations in LLMs.

\subsection{Mapping Autoregressive Generation and Sequence-Level Targets}

Consider an autoregressive language model generating a sequence of categorical tokens $\mathbf{X} = (X_1, \dots, X_N)$. To apply the MDC framework, we explicitly define the mapping between the mathematical abstraction of Section \ref{sec:setup} and the generative process:
\begin{itemize}[leftmargin=*]
    \item \textbf{State Space ($\mathcal{A} \to \mathcal{V}$):} The abstract finite state space $\mathcal{A}$ corresponds to the token vocabulary $\mathcal{V}$. The generation follows the step-by-step conditional probability $\mathbb{P}(X_j \in \cdot \mid X_{1:j-1})$.
    \item \textbf{Target Function ($f \to \text{Reward}$):} The measurable target function $f: \mathcal{V}^N \to \mathbb{R}$ represents the sequence-level reward model utilized in Reinforcement Learning from Human Feedback (RLHF) \cite{christiano2017deep, ouyang2022training}. Establishing theoretical guarantees for policy evaluation requires bounding the finite-sample deviation $\mathbb{P}(|f(\mathbf{X}) - \mathbb{E}[f(\mathbf{X})]| \ge t)$.
    \item \textbf{Sensitivity Vector ($\mathbf{c}$):} The deterministic sensitivity of the target function $f$ is bounded by Condition \ref{cond:lipschitz_general}. If modifying the $j$-th token alters the scalar evaluation $f$ by at most $c_j$, the sensitivity vector is $\mathbf{c} = (c_1, \dots, c_N)^\top$. In tasks evaluating strictly the final generated answer, modifying intermediate tokens does not deterministically alter the terminal score. This structural property yields zero entries in $\mathbf{c}$ for all non-terminal coordinates.
    \item \textbf{Interdependence Matrix ($H$):} The matrix $H$ quantifies directed causal influence within the autoregressive decoding process. The entry $H_{i,j}$ bounds the total variation shift in the local transition kernel $p_j$ induced by altering the input token at step $i$, while conditioning on a fixed intermediate context. This mathematically models the attention weights and hidden state dynamics of the sequence model.
\end{itemize}

To ensure the mathematical convergence of the variance proxy, we specify the following condition on the cumulative causal dependence.

\begin{condition}[Sub-critical Causal Influence]\label{cond:subcritical_attention}
The interdependence matrix $H$ satisfies the maximum absolute column-sum bound $\|H\|_1 = \max_j \sum_{i=1}^{j-1} H_{i,j} \le \alpha$ for a constant $\alpha \in (0, 1)$.
\end{condition}

This mathematical condition aligns with architectural constraints in efficient long-context generation. In sliding window attention \cite{beltagy2020longformer, jiang2023mistral}, the direct causal influence evaluates strictly to zero outside a fixed local window, bounding the column sum irrespective of the sequence length $N$. In state space models \cite{gu2023mamba} and linear transformers \cite{katharopoulos2020transformers}, the recurrent hidden states are constrained to be contractive operators. This ensures an exponential decay in the total variation shift over time, guaranteeing the convergence of the column sums.

Under Condition \ref{cond:subcritical_attention}, the resolvent matrix $\itGamma = (I - H)^{-1}$ exists. By the standard property of induced matrix norms, its $\ell_1$-operator norm is strictly bounded: $\|\itGamma\|_1 \le \frac{1}{1-\alpha}$.

\subsection{Dimension-Free Concentration for Sparse Terminal Targets}

The explicit evaluation of the exact matrix-vector multiplication $\|\itGamma \mathbf{c}\|_2^2$ provides an analytical mechanism to prevent scalar collapse. We demonstrate this by analyzing a target function $f$ evaluated exclusively on the terminal state. The corresponding sensitivity vector contains a single non-zero entry: $\mathbf{c} = (0, \dots, 0, c_N)^\top$.

We apply Theorem \ref{thm:general_matrix_decoupling} to derive a concentration bound that scales strictly independently of the sequence length $N$.

\begin{figure}[htbp]
    \centering
    \includegraphics[width=0.7\textwidth]{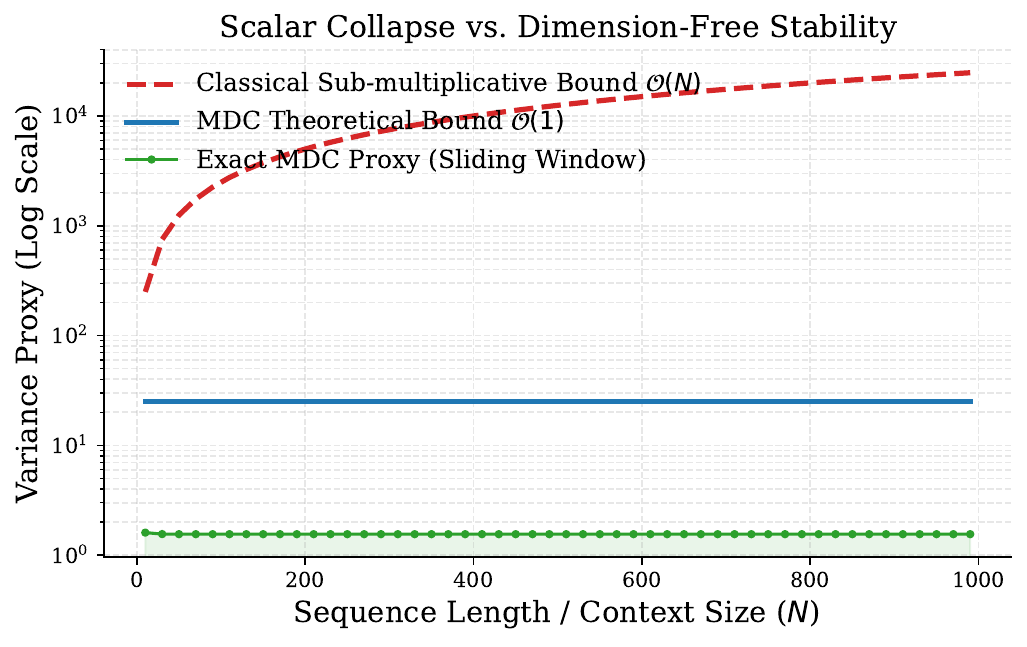}
    \caption{\textbf{Numerical Evaluation of Variance Proxy Bounds.} Theoretical variance proxies are evaluated for a non-Markovian autoregressive sequence with sliding window attention (window size $W=5$, maximum column-sum $\alpha=0.8$). For a sparse terminal target function ($\mathbf{c} = (0, \dots, 0, c_N)^\top$), classical macroscopic relaxation ($\sigma^2 \le N \|\itGamma\|_{\infty}^2 \|\mathbf{c}\|_\infty^2$) decouples the sensitivity vector from the resolvent matrix, structurally forcing an $\mathcal{O}(N)$ scaling divergence. In contrast, evaluating the exact matrix-vector multiplication $\|\itGamma \mathbf{c}\|_2^2$ under the MDC framework preserves the coordinate-wise zero entries of $\mathbf{c}$, mathematically isolating the terminal variation. This strictly maintains a dimension-free $\mathcal{O}(1)$ bound, verifying the theoretical guarantees established in Proposition \ref{prop:llm_reward}.}
    \label{fig:mdc_comparison}
\end{figure}

\begin{proposition}[Dimension-Free Bound for Sparse Targets]\label{prop:llm_reward}
Let $\mathbf{X}$ be an autoregressive sequence generated by a system satisfying Condition \ref{cond:subcritical_attention}. Let $f(\mathbf{X})$ be a target function evaluated solely on the final token, corresponding to the sparse sensitivity vector $\mathbf{c} = (0, \dots, 0, c_N)^\top$. The evaluation satisfies the following concentration inequality:
\beqnn
\mathbb{P}\bigl(|f(\mathbf{X}) - \mathbb{E}[f(\mathbf{X})]|\ge t\bigr)
\le
2\exp\left(-\frac{2t^{2} (1-\alpha)^2}{c_N^2}\right).
\eeqnn
\end{proposition}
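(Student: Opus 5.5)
We plan to apply Theorem \ref{thm:general_matrix_decoupling} directly and bound $\|\itGamma \mathbf{c}\|_2^2$ by $c_N^2/(1-\alpha)^2$. Since $\mathbf{c} = c_N \mathbf{e}_N$, we have $\itGamma \mathbf{c} = c_N \itGamma \mathbf{e}_N$, which is $c_N$ times the last column of $\itGamma$. The whole problem therefore reduces to controlling the $\ell_2$-norm of a single column of the resolvent. This is exactly where the sparsity of $\mathbf{c}$ is preserved: no row sum over $N$ coordinates ever appears.

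\emph{Step 1.} We bound the last column of $\itGamma$ in the $\ell_1$-norm. Because $\itGamma = \sum_{r=0}^{N-1} H^r$ is entry-wise non-negative, the $\ell_1$-norm of its $N$-th column is at most the maximal column sum, which is $\|\itGamma\|_1$. The operator norm $\|\cdot\|_1$ is sub-multiplicative, and Condition \ref{cond:subcritical_attention} gives $\|H\|_1 \le \alpha < 1$. Hence
\[
\|\itGamma\|_1 \le \sum_{r=0}^{N-1} \|H\|_1^r \le \sum_{r\ge 0}\alpha^r = \frac{1}{1-\alpha},
\]
which is the bound already noted after Condition \ref{cond:subcritical_attention}. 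Consequently
\[
\|\itGamma \mathbf{e}_N\|_1 = \sum_{k=1}^N \itGamma_{k,N} \le \frac{1}{1-\alpha}.
\]

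\emph{Step 2.} We pass from the $\ell_1$-norm to the $\ell_2$-norm. For any vector, $\|\mathbf{u}\|_2 \le \|\mathbf{u}\|_1$; this holds because $\sum_k u_k^2 \le (\sum_k |u_k|)^2$, a comparison that involves no dimension factor. Therefore
\[
\|\itGamma \mathbf{c}\|_2^2 = c_N^2 \|\itGamma \mathbf{e}_N\|_2^2 \le c_N^2 \|\itGamma \mathbf{e}_N\|_1^2 \le \frac{c_N^2}{(1-\alpha)^2}.
\]

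\emph{Step 3.} Substituting this variance proxy into \eqref{eq:delta_matrix_bound_general} gives $2\exp\bigl(-2t^2(1-\alpha)^2/c_N^2\bigr)$, which is the claimed bound.

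The only point requiring care is to use the column-sum norm $\|\cdot\|_1$ rather than the row-sum norm $\|\cdot\|_\infty$. The vector $\itGamma\mathbf{c}$ is a column of $\itGamma$, and Condition \ref{cond:subcritical_attention} controls exactly column sums, so the two match. The conversion from $\ell_1$ to $\ell_2$ goes in the favorable direction and introduces no factor depending on $N$. A row-sum control would instead have forced a bound over all $N$ coordinates, which is precisely the scalar collapse this argument avoids.
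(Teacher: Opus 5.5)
Your proposal is correct and follows the paper's proof: you isolate the $N$-th column of $\itGamma$, bound the sum of its squared entries by the square of its (non-negative) column sum, and control that column sum by $\|\itGamma\|_1 \le (1-\alpha)^{-1}$ before substituting into Theorem \ref{thm:general_matrix_decoupling}. The only difference is that you derive $\|\itGamma\|_1 \le (1-\alpha)^{-1}$ explicitly from the truncated Neumann series and sub-multiplicativity of $\|\cdot\|_1$, where the paper simply cites it as a standard property of induced norms.
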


\begin{proof}
By Theorem \ref{thm:general_matrix_decoupling}, the variance proxy is determined by $\|\itGamma \mathbf{c}\|_2^2$. Because $\mathbf{c} = (0, \dots, 0, c_N)^\top$, the exact matrix-vector multiplication exactly isolates the $N$-th column of $\itGamma$. Since $H$ is entry-wise non-negative, the Neumann series $\itGamma = \sum_{r=0}^{N-1} H^r$ is also non-negative. Algebraically bounding the sum of squares by the square of the sum yields:
$$ \|\itGamma \mathbf{c}\|_2^2 = \sum_{k=1}^N (\itGamma_{k, N} c_N)^2 \le c_N^2 \left( \sum_{k=1}^N \itGamma_{k,N} \right)^2 \le c_N^2 \|\itGamma\|_1^2. $$
By the property of the induced $\ell_1$-norm under Condition \ref{cond:subcritical_attention}, we have $\|\itGamma\|_1 \le (1-\alpha)^{-1}$. Thus, $\|\itGamma \mathbf{c}\|_2^2 \le c_N^2 (1-\alpha)^{-2}$. This proxy is strictly independent of the sequence length $N$, which completes the proof.
\end{proof}

\begin{remark}[The Algebraic Mechanism against Scalar Collapse]
Classical sequential martingale inequalities \cite{kontorovich2008concentration} decouple the dependency matrix from the sensitivity vector by applying a sub-multiplicative norm inequality, bounding the variance proxy via the operator infinity norm: $\sigma^2 \le N \|\itGamma\|_{\infty}^2 \|\mathbf{c}\|_\infty^2$. For a sparse target where $\|\mathbf{c}\|_\infty = c_N$, this macroscopic relaxation evaluates to $\mathcal{O}(N)$. Algebraically, this sub-multiplicative decoupling treats the maximum coordinate sensitivity as uniformly applicable across all dimensions, artificially inflating the theoretical variance to scale linearly with the context length. 

In the MDC framework, evaluating the exact matrix-vector multiplication $\itGamma \mathbf{c}$ prior to applying the vector norm structurally prevents this inflation. As derived in the proof of Proposition \ref{prop:llm_reward}, the zero entries in $\mathbf{c}$ multiply with the off-diagonal entries of $\itGamma$, mathematically nullifying their contribution to the resulting variance proxy. This algebraic operation strictly shifts the bounding constraint from the matrix row sum ($\|\itGamma\|_\infty$) to the matrix column sum ($\|\itGamma\|_1$). This exact algebraic preservation provides a formal mathematical proof for the theoretical stability of autoregressive models generating long-context sequences under sparse sequence-level evaluations.
\end{remark}

\section{Conclusion and Future Work}

In this work, we introduced the Matrix-Decoupled Concentration (MDC) framework for dependent sequences. Crucially, our variance proxy $\|\itGamma \mathbf{c}\|_2^2$ explicitly retains the full structural characteristics of both the target function's sensitivity vector and the system's dependency structure, circumventing classical scalar collapse.

The main result, Theorem \ref{thm:general_matrix_decoupling}, allows us to: (i) recover the absolute optimal constants for contracting Markov chains, (ii) establish optimal $\mathcal{O}(N)$ bounds for directed causal trees, and (iii) yield dimension-free $\mathcal{O}(1)$ guarantees for densely connected autoregressive LLMs under sparse terminal rewards. By breaking traditional dimensional barriers, MDC provides a rigorous theoretical justification for the stability of long-context generation in modern AI.

While our method resolves the scalar collapse problem for bounded differences, it opens several foundational avenues for future research:
\begin{enumerate}[leftmargin=*]
    \item \textbf{Predictable Quadratic Variation (Freedman-Type Bounds):} 
    Our current framework fundamentally relies on the worst-case absolute span $\delta_k$ via Hoeffding's Lemma. In highly sparse transition systems, the actual probability of a structural shift may be microscopically small. Developing a \textit{Freedman/Bernstein-type martingale bound} \cite{freedman1975on, tropp2012user} that tracks the propagation of data-dependent variances via explicit matrix-vector multiplication would provide a massive leap in tightness for heavy-tailed dependent systems.
    
    \item \textbf{Beyond Worst-Case (Annealed Average Coupling):} 
    The interdependence matrix $H_{i,j}$ takes a stringent supremum over all possible past trajectories (a \textit{quenched} $L^\infty$ bound). Relaxing the definition of $H_{i,j}$ to an \textit{annealed} $L^1$ bound---averaging over the natural probability measure of the history rather than taking the supremum---would vastly expand the applicability to non-uniformly mixing sequences.
    
    \item \textbf{Beyond Directed Time (Markov Random Fields):} 
    Our sequential approach intrinsically relies on the Doob martingale decomposition, enforcing a unidirectional arrow of time. Investigating whether the resolvent matrix bound strictly holds for non-triangular dependency graphs via Glauber dynamics represents a fascinating frontier connecting empirical process theory with statistical physics \cite{levin2017markov}.
\end{enumerate}

\bibliographystyle{unsrtnat}
\bibliography{ai_arxiv}

\section*{Appendix: Proof of Theorem \ref{thm:general_matrix_decoupling}}\label{sec:appendix_proof}

\begin{proof}
Let $(\mathcal{F}_i)_{i=0}^N$ be the natural filtration generated by $\{X_i\}_{i=1}^N$, with $\mathcal{F}_0 = \{\emptyset, \Omega\}$. Define the Doob martingale $M_k = \mathbb{E}[f(\mathbf{X})\mid \mathcal{F}_k]$ for $k = 0, \dots, N$. We have $M_0 = \mathbb{E}[f(\mathbf{X})]$ and $M_N = f(\mathbf{X})$. By telescoping, 
\beqnn
f(\mathbf{X})- \mathbb{E}[f(\mathbf{X})] = \sum_{k=1}^N \Delta_k, \quad \text{where} \quad \Delta_k = M_k - M_{k-1}. 
\eeqnn
Using the localized transition probability kernels $p_j(\cdot \mid x_{1:j-1})$, we define the integration function for any history $\mathbf{u} \in \mathcal{A}^k$:
\beqlb\label{defF}
F_k(\mathbf{u}) := \sum_{x_{k+1:N} \in \mathcal{A}^{N-k}} f(\mathbf{u}, x_{k+1:N}) \prod_{j=k+1}^N p_j(x_j \mid \mathbf{u}, x_{k+1:j-1}).
\eeqlb
Conditional on the event $\{X_{1:k}=\mathbf{u}\}$, we have $M_k = F_k(\mathbf{u})$ almost surely.

Fix the past filtration $\mathcal{F}_{k-1}$. Conditioned on any valid realization $\{X_{1: k-1}=\mathbf{x}\} \in \mathcal{F}_{k-1}$ with strictly positive probability, the predictable term $M_{k-1}$ is deterministically fixed, and the martingale $M_k$ takes values depending exclusively on the realization of $X_k$. Thus, the maximum conditional range (oscillation) of $\Delta_k$ is bounded exactly by:
\beqnn
\delta_k(\mathbf{x}) \ar:=\ar 
\max_{x, x' \in \mathcal{A}} \Bigl| F_k(\mathbf{x}, x) - F_k(\mathbf{x}, x') \Bigr|.
\eeqnn

Fix the worst-case pair $(x, x')$ and the valid prefix $\mathbf{x}$.
We sequentially construct a joint probability coupling $(\mathbf{Y}, \mathbf{Z}) \in \mathcal{A}^N \times \mathcal{A}^N$ on an extended probability space $(\hat{\Omega}, \hat{\mathcal{F}}, \hat{\mathbb{P}})$, equipped with the natural filtration $\hat{\mathcal{F}}_j = \sigma(Y_{1:j}, Z_{1:j})$. For the initial steps up to $k$, we deterministically set $Y_{1:k} = (\mathbf{x}, x)$ and $Z_{1:k} = (\mathbf{x}, x')$.
 
Proceeding by induction, for any step $j > k$, suppose that the filtered probability space and the coupling process $(\mathbf{Y}, \mathbf{Z})$ have been validly defined up to step $j-1$. Let $\{Y_{k:j-1}=\mathbf{y}, Z_{k:j-1}=\mathbf{z}\}$ be an arbitrary atom of $\hat{\mathcal{F}}_{j-1}$. Conditioned on this event and the fixed prefix $X_{1:k-1}=\mathbf{x}$, we define the target conditional marginal distributions for step $j$ via the transition kernels of $\mathbf{X}$: 
\beqlb\label{eq:target_mu}
\mu_j^+(\cdot) \ar:=\ar p_j(\cdot \mid \mathbf{x}, \mathbf{y}), \cr
\mu_j^-(\cdot) \ar:=\ar p_j(\cdot \mid \mathbf{x}, \mathbf{z}).
\eeqlb

The Optimal Coupling Theorem for Total Variation (see, e.g., \cite{levin2017markov}) guarantees that there exist random variables $(Y_j, Z_j)$ such that almost surely:
\beqlb\label{eq:coupled_prob}
\hat{\mathbb{P}}(Y_j \in \cdot \mid \hat{\mathcal{F}}_{j-1}) \ar=\ar \mu_j^+(\cdot), \cr
\hat{\mathbb{P}}(Z_j \in \cdot \mid \hat{\mathcal{F}}_{j-1}) \ar=\ar \mu_j^-(\cdot),
\eeqlb
and their conditional probability of disagreement almost surely attains the total variation lower bound:
\beqlb\label{div1}
\hat{\mathbb{P}}(Y_j \neq Z_j \mid \hat{\mathcal{F}}_{j-1}) \ar=\ar d_{\mathrm{TV}}(\mu_j^+, \mu_j^-).
\eeqlb
Iterating this optimal conditional coupling up to the final horizon $N$ completes the construction of the joint trajectories $(\mathbf{Y}, \mathbf{Z})$. 

To bound the conditional divergence $d_{\mathrm{TV}}(\mu_j^+, \mu_j^-)$, we employ a path coupling argument. We construct $\hat{\mathcal{F}}_{j-1}$-measurable hybrid vectors to interpolate between the coupled histories. For each $m \in \{k-1, \dots, j-1\}$, define:
\beqnn
W^{(m)} = (Z_k, \dots, Z_m, Y_{m+1}, \dots, Y_{j-1}),
\eeqnn
with the boundary conventions $W^{(k-1)} = Y_{k:j-1}$ and $W^{(j-1)} = Z_{k:j-1}$. Applying the triangle inequality of the metric, we evaluate the telescoping sum:
\beqlb\label{eq:telescoping_tv}
d_{\mathrm{TV}}(\mu_j^+, \mu_j^-) \le \sum_{m=k}^{j-1} d_{\mathrm{TV}}\Big( 
p_j(\cdot \mid \mathbf{x}, W^{(m)}), p_j(\cdot \mid \mathbf{x}, W^{(m-1)}) \Big). 
\eeqlb
Because $W^{(m)}$ and $W^{(m-1)}$ differ exclusively at the $m$-th coordinate (taking values $Z_m$ and $Y_m$, respectively), their conditional TV distance evaluates exactly to $0$ if $Y_m = Z_m$. If $Y_m \neq Z_m$, definition \eqref{defH} bounds this distance by $H_{m,j}$. Therefore:
\beqnn
\hat{\mathbb{P}}(Y_j \neq Z_j \mid \hat{\mathcal{F}}_{j-1}) \le \sum_{m=k}^{j-1} H_{m,j} \mathbbm{1}_{\{Y_m \neq Z_m\}}.
\eeqnn

For any $i=1,\dots, N$, define the marginal discrepancy probability $v_i := \hat{\mathbb{P}}(Y_i \neq Z_i)$. The deterministic initialization implies $v_m = 0$ for all $m < k$, and for the worst-case pair $x \neq x'$, exactly $v_k = 1$. Taking the unconditional expectation $\hat{\mathbb{E}}[\cdot]$ via the tower property yields:
\beqnn
v_j = \hat{\mathbb{E}}\Big[ \hat{\mathbb{P}}(Y_j \neq Z_j \mid \hat{\mathcal{F}}_{j-1}) \Big] \le \sum_{m=k}^{j-1} H_{m,j} \hat{\mathbb{E}}\big[\mathbbm{1}_{\{Y_m \neq Z_m\}}\big] = \sum_{m=k}^{j-1} H_{m,j} v_m, \quad \forall j > k.
\eeqnn
Define the discrepancy vector $\mathbf{v} := (0, \dots, 0, 1, v_{k+1}, \dots, v_N)^\top$ and let $\mathbf{e}_k$ be the $k$-th standard basis vector. The recursive inequalities reformulate algebraically as $(I - H^\top)\mathbf{v} \le \mathbf{e}_k$. Because $H$ is entry-wise non-negative, the transposed resolvent matrix $\itGamma^\top = (I - H^\top)^{-1} = \sum_{r=0}^{N-1} (H^\top)^r$ exists and is entry-wise non-negative. Left-multiplying the inequality by $\itGamma^\top$ strictly preserves the direction of the inequality, yielding:
\beqlb\label{inecop1}
\mathbf{v} \le (I - H^\top)^{-1} \mathbf{e}_k = \itGamma^\top \mathbf{e}_k.
\eeqlb

Since the coupling deterministically sets $Y_{1:k} = (\mathbf{x}, x)$ and $Z_{1:k} = (\mathbf{x}, x')$, sequentially applying the chain rule of probability to \eqref{eq:target_mu} and \eqref{eq:coupled_prob} yields:
\beqnn
\hat{\mathbb{P}}(Y_{k+1:N} = y_{k+1:N}) \ar=\ar \prod_{j=k+1}^N p_j(y_j \mid \mathbf{x}, x, y_{k+1:j-1}).
\eeqnn
This exactly matches the product measure of the integration function $F_k$ defined in \eqref{defF}, which strictly implies that $\hat{\mathbb{E}}[f(\mathbf{Y})] = F_k(\mathbf{x}, x)$. By identical reasoning for the sequence $\mathbf{Z}$ with prefix $(\mathbf{x}, x')$, we have $\hat{\mathbb{E}}[f(\mathbf{Z})] = F_k(\mathbf{x}, x')$. The martingale difference equals the expected divergence under the coupling:
\beqnn
F_k(\mathbf{x}, x) - F_k(\mathbf{x}, x') = \hat{\mathbb{E}}\big[ f(\mathbf{Y}) - f(\mathbf{Z}) \big].
\eeqnn

Applying Condition \ref{cond:lipschitz_general} to the coupled trajectories yields $|f(\mathbf{Y}) - f(\mathbf{Z})| \le \sum_{j=1}^N c_j \mathbbm{1}_{\{Y_j \neq Z_j\}}$. Taking the expectation $\hat{\mathbb{E}}[\cdot]$ on both sides and utilizing $\mathbbm{1}_{\{Y_j \neq Z_j\}} = 0$ for all $j < k$, we obtain:
\beqnn
\Bigl| \hat{\mathbb{E}}[f(\mathbf{Y}) - f(\mathbf{Z})] \Bigr| 
\le \hat{\mathbb{E}}\Big[ \big| f(\mathbf{Y}) - f(\mathbf{Z}) \big| \Big] 
\le \hat{\mathbb{E}}\left[ \sum_{j=k}^N c_j \mathbbm{1}_{\{Y_j \neq Z_j\}} \right] 
= \sum_{j=k}^N c_j v_j = \mathbf{c}^\top \mathbf{v},
\eeqnn
where the final inequality holds because $v_k = 1 \ge \mathbbm{1}_{\{Y_k \neq Z_k\}}$. Substituting the derived vector inequality \eqref{inecop1} evaluates to:
\beqnn
\mathbf{c}^\top \mathbf{v} \le \mathbf{c}^\top (\itGamma^\top \mathbf{e}_k) = (\itGamma \mathbf{c})^\top \mathbf{e}_k = (\itGamma \mathbf{c})_k.
\eeqnn
Since this holds for any valid prefix $\mathbf{x}$ and any state pair $(x, x')$, the conditional range of the martingale difference $\Delta_k$ is strictly bounded by length $\delta_k(\mathbf{x}) \le (\itGamma \mathbf{c})_k$.

Conditional on $\mathcal{F}_{k-1}$, the zero-mean random variable $\Delta_k$ lies almost surely in an interval of length at most $(\itGamma \mathbf{c})_k$. Applying Hoeffding's Lemma bounds the conditional moment generating function for any $\lambda > 0$:
\beqnn
\mathbb{E}[\exp(\lambda \Delta_k) \mid \mathcal{F}_{k-1}] \le \exp\left( \frac{\lambda^2 \delta_k(\mathbf{x})^2}{8} \right) \le \exp\left( \frac{\lambda^2 (\itGamma \mathbf{c})_k^2}{8} \right).
\eeqnn

Applying the tower property of conditional expectation recursively, the moment generating function of the centered sum $S_N = f(\mathbf{X}) - \mathbb{E}[f(\mathbf{X})] = \sum_{k=1}^N \Delta_k$ evaluates to:
\beqnn
\mathbb{E}[\exp(\lambda S_N)] \ar=\ar \mathbb{E}\Big[ \exp(\lambda S_{N-1}) \mathbb{E}[ \exp(\lambda \Delta_N) \mid \mathcal{F}_{N-1} ] \Big] \cr
\ar\le\ar \dots \le \exp\left( \frac{\lambda^2}{8} \sum_{k=1}^N (\itGamma \mathbf{c})_k^2 \right) = \exp\left( \frac{\lambda^2 \|\itGamma \mathbf{c}\|_2^2}{8} \right).
\eeqnn

Applying Markov's inequality to the exponential moments yields the Chernoff bound:
\beqnn
\mathbb{P}(S_N \ge t) \le \inf_{\lambda > 0} \exp(-\lambda t) \mathbb{E}[\exp(\lambda S_N)] \le \inf_{\lambda > 0} \exp\left( -\lambda t + \frac{\lambda^2 \|\itGamma \mathbf{c}\|_2^2}{8} \right).
\eeqnn
Optimizing the quadratic exponent by setting $\lambda = 4t / \|\itGamma \mathbf{c}\|_2^2$ yields the one-sided bound $\exp(-2t^2 / \|\itGamma \mathbf{c}\|_2^2)$. A symmetric derivation for $-S_N$ guarantees the two-sided concentration inequality \eqref{eq:delta_matrix_bound_general}. 
\end{proof}

\begin{proof}[Proof of Corollary \ref{cor:general_kappa}]
It is clear from the definition of the induced matrix norm that $\|\itGamma \mathbf{c}\|_2^2 \le \|\itGamma\|_2^2 \|\mathbf{c}\|_2^2 = \frac{1}{\kappa} \|\mathbf{c}\|_2^2$. 
For the uniform decay case, Schur's test guarantees that  $\|H\|_2 \le \sqrt{\|H\|_1 \|H\|_\infty} \le S$. 
Using the discrete Neumann series, we have $\|\itGamma\|_2 \le (1 - \|H\|_2)^{-1} \le (1 - S)^{-1}$, which directly yields the lower bound $\kappa \ge (1 - S)^2$. 
Applying Theorem \ref{thm:general_matrix_decoupling} with this variance proxy completes the proof.
\end{proof}

\end{document}